%% file: arXiv.tex
\documentclass[letterpaper, 10 pt, conference]{ieeeconf}  % Comment this line out if you need a4paper

\IEEEoverridecommandlockouts                              % This command is only needed if 
\usepackage{censor}
\usepackage{graphics} % for pdf, bitmapped graphics files
\usepackage{amsmath} % assumes amsmath package installed
\usepackage{amssymb}  % assumes amsmath package installed

\usepackage{hyperref}
\usepackage{booktabs}   
\usepackage{array}
\usepackage{algorithm}
\usepackage{amsfonts}     
\usepackage{bbm}
\usepackage{caption}
\usepackage{subcaption}
\usepackage{comment}
\usepackage{tikz}

\usepackage{algpseudocode}
\newtheorem{assumption}{Assumption}
\newtheorem{theorem}{Theorem}
\newtheorem{remark}{Remark}

\usetikzlibrary{positioning, arrows, shapes.geometric}
\usetikzlibrary{arrows.meta, positioning, calc, shapes.geometric}

\title{\LARGE \bf
Integrating LTL Constraints into PPO for Safe Reinforcement Learning}

\author{Maifang Zhang$^{1,*}$, Hang Yu$^{2,*}$, Qian Zuo$^{1,*}$, Cheng Wang$^{3}$, Vaishak Belle$^{1}$, and Fengxiang He$^{1,\dag}$% <-this % stops a space
\thanks{$^{*}$Equal Contribution.}%
\thanks{$^{\dag}$Corresponding author. Email: \href{F.He@ed.ac.uk}{F.He@ed.ac.uk}.}
\thanks{$^{1}$M. Zhang, Q. Zuo, V. Belle, and F. He were with the School of Informatics, University of Edinburgh, Edinburgh EH8 9AB, Scotland.}
\thanks{$^{2}$H. Yu was with the School of Computer Science, Faculty of Engineering, University of Sydney, Darlington NSW 2008, Australia.}
\thanks{$^{3}$C. Wang was with the School of Engineering and Physical Sciences, Heriot-Watt University, Edinburgh EH14 4AS, Scotland.}
}
\begin{document}

\maketitle
\thispagestyle{empty}
\pagestyle{empty}

%%%%%%%%%%%%%%%%%%%%%%%%%%%%%%%%%%%%%%%%%%%%%%%%%%%%%%%%%%%%%%%%%%%%%%%%%%%%%%%%
\begin{abstract}
This paper proposes Proximal Policy Optimization with Linear Temporal Logic Constraints (PPO-LTL), a framework that integrates safety constraints written in LTL into PPO for safe reinforcement learning. LTL constraints offer rigorous representations of complex safety requirements, such as regulations that broadly exist in robotics, enabling systematic monitoring of safety requirements. Violations against LTL constraints are monitored by limit-deterministic Büchi automata, and then translated by a logic-to-cost mechanism into penalty signals. The signals are further employed for guiding the policy optimization via the Lagrangian scheme. Extensive experiments on the Zones and CARLA environments show that our PPO-LTL can consistently reduce safety violations, while maintaining competitive performance, against the state-of-the-art methods. The code is at \url{https://github.com/EVIEHub/PPO-LTL}. 
\end{abstract}

%%%%%%%%%%%%%%%%%%%%%%%%%%%%%%%%%%%%%%%%%%%%%%%%%%%%%%%%%%%%%%%%%%%%%%%%%%%%%%%%

\input{sections/MainText}

\input{sections/Appendices}

\section{Conclusion}

This paper introduces a PPO-LTL framework that augments PPO with safety specifications expressed in LTL through the Lagrangian scheme, providing a precise way to encode complex safety requirements, such as regulatory rules. %Constraint violations are tracked using limit-deterministic Büchi automata and then converted into penalty signals through a logic-to-cost mapping.
Theoretical guarantee on convergence is provided. Experiments in the Zones and CARLA environments are in full support of our method. %demonstrate that PPO-LTL consistently lowers safety violations while preserving competitive task performance, outperforming its counterparts. 

%%%%%%%%%%%%%%%%%%%%%%%%%%%%%%%%%%%%%%%%%%%%%%%%%%%%%%%%%%%%%%%%%%%%%%%%%%%%%%%%
%\newpage

%\section*{APPENDIX}

%Appendixes should appear before the acknowledgment.

%\section*{ACKNOWLEDGMENT}

%The preferred spelling of the word ÒacknowledgmentÓ in America is without an ÒeÓ after the ÒgÓ. Avoid the stilted expression, ÒOne of us (R. B. G.) thanks . . .Ó  Instead, try ÒR. B. G. thanksÓ. Put sponsor acknowledgments in the unnumbered footnote on the first page.

%%%%%%%%%%%%%%%%%%%%%%%%%%%%%%%%%%%%%%%%%%%%%%%%%%%%%%%%%%%%%%%%%%%%%%%%%%%%%%%%

\bibliographystyle{IEEEtran} \bibliography{Ref}

%\newpage

%\appendices
%\input{sections/Appendices}

\end{document}

%% file: sections/MainText.tex
\section{Introduction}

Reinforcement Learning (RL) has %been shown to 
achieved remarkable success across diverse domains, including robotics \cite{silver2016mastering}. %\fh{introduce PPO}  
In RL, Proximal Policy Optimization (PPO) is a widely adopted on-policy method, %due to its simplicity, stability, and strong empirical performance. 
which constrains policy updates with a clipped surrogate objective, striking a balance between exploration and stability \cite{schulman2017proximal}. %, which makes it a default choice for many continuous control benchmarks.
Deploying RL in safety-critical environments remains highly challenging, where violations against safety constraints can lead to catastrophic outcomes.
Safe RL has been comprehensively reviewed and addresses this challenge in the framework of constrained optimization,
%As a canonical model, constrained Markov decision process (CMDP)  \cite{altman2021constrained} formalizes this setting, 
where the agent seeks to maximize reward whilst bounding cumulative safety costs \cite{garcia2015comprehensive}. 
Within this family, constrained PPO via the Lagrangian scheme (PPO-Lagrangian)  has emerged as a major approach \cite{ray2019benchmarking}. %to adaptively balance performance and constraint satisfaction.
%\fhc{please differentiate citet and citep} 

Despite these advances, a critical limitation remains: the constraints need to be written in analytic inequalities of the agent's state and action. This is not compatible with a large family of abstract safety constraints, such as regulations that broadly exist in robotics. For example, the British Highway Code regulates driving behavior \cite{tennant2021code}, which is difficult to translate to the aforementioned inequalities.
%how to specify safety constraints in a principled and generalizable manner. 
%Traditional approaches often rely on manually designed penalties or heuristic reward shaping, which struggle to capture complex temporal safety requirements and frequently fail in rare but critical scenarios.
This calls for machine-computable and principled safety specifications within the RL training process.

\begin{comment}
    
\begin{figure}[t]
  \centering
  \includegraphics[width=0.4\textwidth]{sections/figures/fig1.pdf}
  \caption{%The illustration of PPO-LTL. 
  Top: Unconstrained policies lead to safety violations. %, including collisions, lane crossings, and traffic rule infractions. 
  Bottom: Safety requirements are formalized as LTL specifications, translated into differentiable cost signals in PPO-LTL.}  %through logic-to-cost monitors. These costs 
%  guiding PPO-LTL to produce a safe policy.} %that satisfies multiple safety constraints while maintaining task performance. The proposed approach bridges symbolic logic with safe reinforcement learning.} 
  %LTL constraints are translated into cost signals, combined with task rewards in a Lagrangian optimization scheme, and used to train a safety-compliant policy interacting with the environment.}
  \label{fig:illustration}
\end{figure}
\end{comment}

To address this issue, we propose Proximal Policy Optimization with Linear Temporal Logic Constraints (PPO-LTL), a novel method that represents abstract constraints in LTL \cite{pnueli1977temporal,baier2008principles}, and embeds these constraints into PPO agents. 
LTL provides a rigorous and machine-verifiable tool to encode temporal properties, such as ``always avoid unsafe states,'' ``eventually reach a goal,'', and regulatory rules, like ``stop at a red light until it turns green,'' as logic specifications. %\fhc{try to use highway code regulations}

%\fh{move to related works} 

We design a logic-to-cost mechanism that systematically translates violations of temporal logic constraints into cost signals that guide policy learning.
This mechanism can be instantiated in a wide range of environments, serving as a plug-and-play solution.
Each LTL specification is first compiled into an $\omega$-automaton, typically a limit-deterministic Büchi automaton (LDBA) \cite{vardi1996automata, sickert2016limit}. The automaton encodes the satisfaction conditions of the temporal property by defining a finite set of states and labeled transitions based on atomic propositions. During execution, it evolves synchronously with the agent-environment interaction, effectively acting as a runtime monitor that checks whether the agent’s trajectory satisfies the specification. %\fh{describe automata}

When a violation occurs, the monitor emits a cost signal to reflect the severity of the violation, determined by a set of pre-defined weights associated with different safety rules.
These violation costs are aggregated over time and integrated into the safe reinforcement learning framework.
The resulting signals are then combined with policy optimization via the Lagrangian scheme, allowing the agent to optimize task performance while ensuring that safety requirements are satisfied.
Unlike handcrafted penalties, this approach provides a principled, generalizable, and modular way to encode high-level safety requirements into the learning process.

%\fh{a paragraph on theory}
We provide a rigorous theoretical analysis of our approach. We formulate PPO-LTL as an inexact projected primal–dual method driven by biased stochastic gradient oracles. Specifically, we view two usual components in PPO, clipped surrogate objective and finite-epoch minibatch updates, as mechanisms that yield biased stochastic approximations to the true Lagrangian gradient. We then prove an ergodic stationarity guarantee for the projected primal-dual dynamics that underpin PPO-LTL. This result shows that despite the biased and noisy gradient estimates, our algorithm consistently converges to a neighborhood of a stationary point. Practically, it means PPO-LTL can stably reduce constraint violations without relying on exact gradient evaluations, highlighting its robustness in challenging settings such as autonomous driving.

We conduct comprehensive experiments to evaluate PPO-LTL across diverse benchmark environments, including ZonesEnv (continuous control with logical regions) \cite{ji2024omnisafe} and CARLA (autonomous driving simulator) \cite{dosovitskiy2017carla}. Our method is compared against PPO \cite{schulman2017proximal}, as the baseline, and a range of state-of-the-art safe RL methods, including TIRL-PPO, TIRL-SAC \cite{haarnoja2018soft}, PPO-Mask, PPO-Shielding \cite{alshiekh2018safe, jansen2020safe}, and PPO-Lagrangian. The empirical results demonstrate consistent reductions in safety violations while maintaining competitive task performance. Extensive ablation study and sensitivity analysis further verify our contributions. %The code is released anonymously.
%In particular, in the CARLA environment, PPO-LTL reduces the collision rate by 45.4\% compared to vanilla PPO and by 65.0\% compared to PPO-Shielding, highlighting the substantial safety gains achieved by our logic-constrained approach. 
%The code is at \url{https://github.com/eviehub/PPO-LTL}.
%These results demonstrate the effectiveness of combining symbolic logic with constrained policy optimization for Safe RL.
%The code is available at \url{https://anonymous.4open.science/r/PPO-LTL-C228/}.

%\textbf{Safe Reinforcement Learning.} 

%\textbf{Safety Mechanisms in RL.}

%The literature of safe reinforcement learning has been comprehensively surveyed by \cite{garcia2015comprehensive}, and spans several families: 
%\fh{please make sure all relevant papers are cited}
%\textbf{Regularization for safe RL.} Regularization methods encode safety by penalties or constrained surrogates. A representative example is reward-constrained policy optimization \cite{tessler2018reward}, which tunes penalty coefficients through dual variable updates. However, this line of approaches is sensitive to coefficient choices and can fail under rare catastrophic events. %\fhc{references}

\section{Related Work}

\textbf{Shielding.} 
Shielding approaches enforce safety by preempting unsafe actions online using verified policies or model-checking over abstract models \cite{alshiekh2018safe,jansen2020safe}. This yields binary-safe behavior with strong formal guarantees but can restrict exploration and lead to non-stationary data distributions. In contrast, soft-integration methods map violations into cost signals through a logic-to-cost mechanism, providing dense feedback that is compatible with gradient-based optimization and constrained MDP solvers \cite{ray2019benchmarking, altman2021constrained}. Using LTL monitors to emit per-rule costs further enables modular handling of multiple constraints, compositional reasoning across specifications, and straightforward scalability to large rule sets \cite{baier2008principles, camacho2019ltl}. Compared with runtime action filtering strategies, PPO-LTL incorporates temporal logic constraints directly within the policy optimization loop, providing dense feedback signals that support learning under complex temporal requirements. %\fh{discuss our advantages}

\textbf{Logic tools in RL.} 
%\fh{shorten it} 
Beyond LTL, other formal logics have also been explored in reinforcement learning. 
Signal Temporal Logic (STL)~\cite{maler2004monitoring} extends temporal reasoning to real-valued signals, while deontic logics~\cite{chan2024formalise} express normative concepts like obligations and permissions. 
These approaches offer richer expressiveness but often come with higher computational costs and task-specific design complexity, limiting their practicality. 
Probabilistic Logic Programming (PLP)~\cite{de2015probabilistic} has also been introduced to model uncertainty in logical reasoning, allowing agents to encode probabilistic constraints. 
In contrast, LTL achieves a balance: it is lightweight, easy to compile into automata, and sufficient to capture temporal safety properties~\cite{chen2021interpretable}. Recent work~\cite{qi2025risk} applies LTL in a model-based driving framework via product MDPs and linear programming for policy synthesis, while our method focuses on learning-based constrained optimization rather than model-based synthesis.
%This efficiency-expressiveness balance makes LTL the natural choice for PPO-LTL, enables efficient constraining during policy optimization with mild sacrifice of the ability to specify complex temporal safety requirements.
%These characteristics make LTL a natural foundation for our safe RL framework.

%\textbf{Runtime Monitoring and Integration.}

%\textbf{Shielding.}  \textit{Hard integration} (shielding) filters or corrects actions online to avoid specification violations, as proposed by \cite{alshiekh2018safe} and extended by \cite{jansen2020safe}, yielding binary-safe behavior but potentially inducing conservatism and non-stationary data.  \textit{Soft integration} instead maps violations to costs via a logic-to-cost mechanism, producing dense feedback that is amenable to gradient-based optimization and compatible with CMDP solvers, as discussed by \cite{ray2019benchmarking} and \cite{altman2021constrained}.  Using LTL monitors to emit per-rule costs further enables modular multi-constraint handling, compositionality across specifications, and straightforward scaling to many rules, as shown by \cite{baier2008principles} and \cite{camacho2019ltl}.
%(Camacho et al., 2019; Baier \& Katoen, 2008).

\begin{figure*}[t!]
  \centering
  \includegraphics[width=0.8\textwidth]{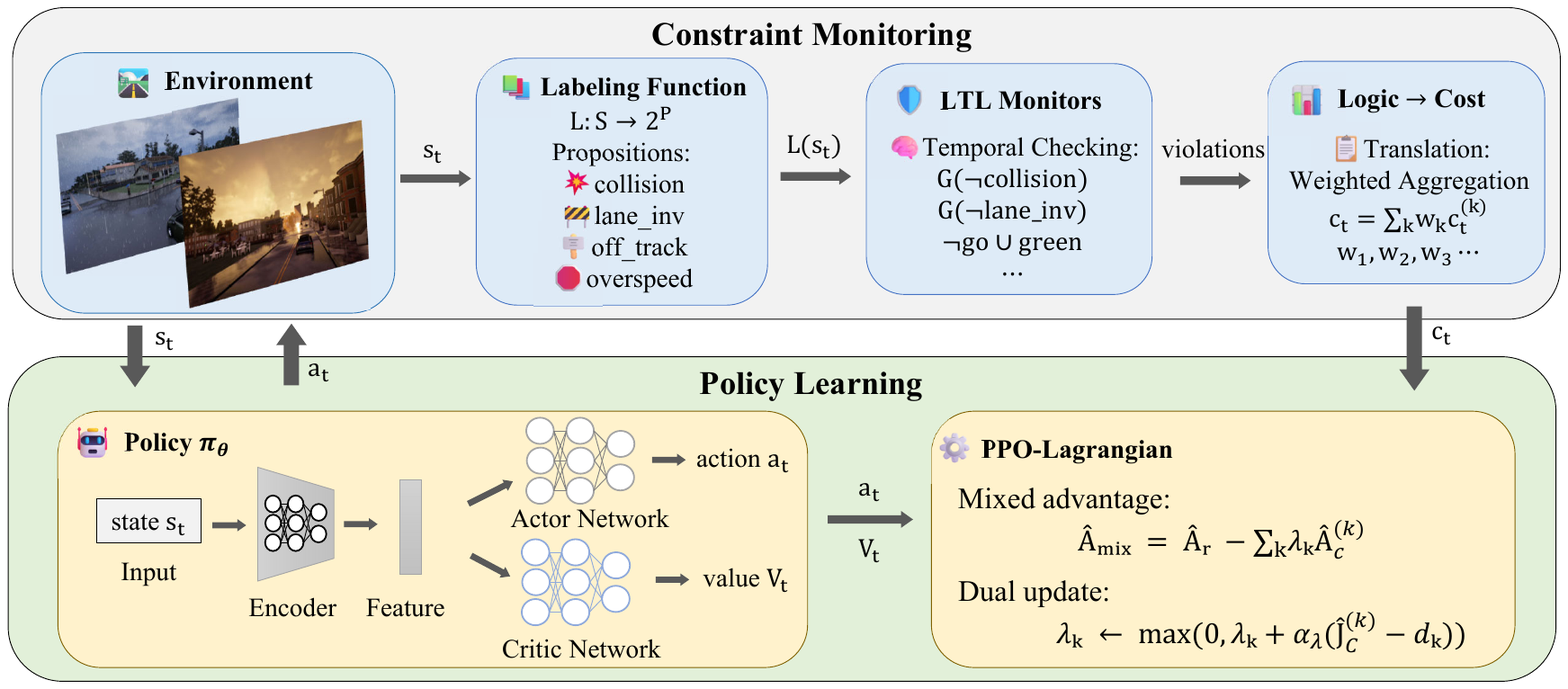}
  \caption{PPO-LTL: environment states are labeled with atomic propositions, monitored by LTL checkers to generate constraint costs, which are integrated with task rewards for policy optimization.} %\fh{can be better}
  \label{fig:framework}
\end{figure*}

\section{Preliminaries}

\textbf{Markov Decision Process (MDP).} \label{sec:cmdp}
We consider a discounted constrained MDP 
$\mathcal{M}=(\mathcal{S},\mathcal{A},P,r,c,\mu,\gamma)$, 
where $\mathcal{S}$ and $\mathcal{A}$ denote the state and action spaces, respectively, 
$P(s'| s,a)$ represents the transition probability of moving from state $s$ to state $s'$ given action $a$, 
$r:(s,a)\rightarrow[0,R_\text{max}]$ is the reward function,  
$c:(s,a)\rightarrow [0,C_\text{max}]$ is the aggregated cost  
\(
c(s,a)=\sum_{i=1}^K w_i c^{(i)}(s,a)
\)
where $w_i$ are fixed weights for $i\in[K]$, 
$\mu$ is the initial-state distribution, and 
$\gamma \in (0,1)$ is the discount factor controlling how much the agent values future rewards.
A policy $\pi: \mathcal{S}\rightarrow \mathcal{P}(\mathcal{A})$ 
is a mapping from the state space to the space of probability distribution over the action space and 
$\pi(a\mid s)$ denotes the probability of selecting action $a$ at state $s$. 
Let $\theta\in \Theta \subset \mathbb{R}^p$ denote the parameters of a stochastic policy $\pi_\theta(\cdot|s)$.
For a given policy $\pi_\theta$, define the discounted occupancy measure over $\mathcal{S}\times\mathcal{A}$ as
\(
d_{\theta}^\gamma
=
(1-\gamma)
\sum_{t=0}^\infty 
\gamma^t 
\mathrm{P}_{\pi_\theta}(s_t=s,a_t=a|\mu)
\).
We define reward value function and cost value function as,
\[
J_R(\theta)
=
\mathbb{E}_{(s,a)\sim d_{\theta}^\gamma}
\left[r(s,a)\right], \text{ }
%\]
%and the cost value function as
%\[
J_C(\theta)
=
\mathbb{E}_{(s,a)\sim d_{\theta}^\gamma}
\left[c(s,a)\right].\]
%\]
The constrained optimization objective is as follows,
\[
\max_{\theta\in\Theta} J_R(\theta)
\quad 
\text{s.t.} 
\quad 
J_C(\theta)\le d,\]
%\]
where $d=\sum_{i=1}^K w_i d_i$ is the aggregated safety budget.
The corresponding Lagrangian function is defined as
\(
\mathcal{L}(\theta,\lambda)
=
J_R(\theta)
-
\lambda(J_C(\theta)- d)\),
%\]
where $\lambda\in[0,\Lambda]$ is the Lagrangian multiplier with $\Lambda<\infty$.

\textbf{Proximal Policy Optimization (PPO).}
Policy gradient methods directly optimize the policy but could suffer from instability if updates are too large \cite{sutton2018reinforcement}. %\fhc{reference} 
%As shown by , 
PPO %\cite{schulman2017proximal} 
addresses this issue by clipping updates, effectively putting a ``safety belt'' on learning.
The clipped surrogate objective $L^{\text{PPO}}(\pi)$ is defined as follows,
\[
\mathbb{E}_t \Big[ 
\min \big( \rho_t(\pi) A_t, \; \text{clip}(\rho_t(\pi), 1-\epsilon, 1+\epsilon) A_t \big) 
\Big],
\]
where $\rho_t(\pi) = \tfrac{\pi(a_t|s_t)}{\pi_{\text{old}}(a_t|s_t)}$ is the importance ratio, 
$A_t$ is the advantage estimate (measuring how good an action is compared to the average), 
and $\epsilon$ is a clipping parameter that prevents drastic updates. In practice, PPO optimizes this objective using multiple epochs of mini-batch stochastic gradient updates over collected rollouts. %\fh{minibatch}

% \fh{introduce basics, including the grammar and automata}

\textbf{Linear Temporal Logic (LTL).} 
LTL is a formal language used to specify temporal properties over infinite sequences of system states. %\cite{pnueli1977temporal,baier2008principles}. 
A formula in LTL, called a specification, defines a desired temporal behavior of the system. An LTL formula is constructed from a finite set of atomic propositions $\mathcal{P}$, which represent fundamental facts about the environment (e.g., ``collision occurred'' or ``goal reached''). 
Each proposition is a Boolean statement about the system state that is either true or false, such as whether the vehicle is currently in a safe zone or whether a traffic light is green. 
These propositions are combined using Boolean connectives ($\neg$, $\land$, $\lor$) and temporal operators such as $\mathbf{G}$ (always), $\mathbf{F}$ (eventually), $\mathbf{X}$ (next), and $\mathbf{U}$ (until) to express more complex requirements over time. 
The semantics of an LTL formula specify the precise conditions under which a specification is considered satisfied. They are defined over a trajectory $\sigma = s_0 s_1 s_2 \dots$, where $\sigma, t \models \varphi$ indicates that the specification $\varphi$ holds at time $t$ — in other words, the system’s behavior up to time $t$ conforms to the property described by $\varphi$. 
For example, $\mathbf{G}\neg\texttt{collision}$ requires that collisions never occur, while $\mathbf{F}\,\texttt{goal}$ means the goal state must eventually be reached. 

\section{PPO-LTL}

%This section describes our algorithm. %, Proximal Policy Optimization with Linear Temporal Logic Constraints (PPO-LTL).

\subsection{Constraints as LTL Specifications}

In real-world applications, many safety requirements depend simultaneously on the environment’s state (e.g., position, distance to obstacles) and the temporal structure of events (e.g., their order and timing) \cite{garcia2015comprehensive,ray2019benchmarking,camacho2019ltl,li2023temporal}.
For example, requirements like “reach the goal eventually after visiting a checkpoint” cannot be simply encoded as scalar penalties. They require temporal reasoning over sequences of states \cite{vaezipoor2021ltl2action,chen2021interpretable}.
To formally represent such requirements, we define them as LTL specifications. 
Each specification is a logical formula describing desired temporal behavior over system trajectories and serves as a formal requirement that the agent’s policy should satisfy. 
A pilot work has been in the literature that represents traffic laws as LTL specifications. %\fhc{Formalise regulations for autonomous vehicles with right-open temporal deontic defeasible logic}.
LTL combines temporal operators with Boolean connectives to specify complex behaviors, e.g.,
always avoiding collisions while eventually reaching a destination
($\mathbf{G}\neg\texttt{collision} \land \mathbf{F}\,\texttt{destination}$),
or requiring that entering an intersection is eventually followed by a green light
($\mathbf{G}(\texttt{intersection} \rightarrow \mathbf{F}\,\texttt{green})$).
This expressiveness enables compact modeling of multi-stage driving rules in autonomous driving scenarios.

%At runtime, specifications are usually enforced by monitors — simple components that observe how the environment evolves over time and check whether the current trajectory meets the required properties. These monitors operate step by step and generate a violation signal whenever the agent’s behavior deviates from the specification.

\subsection{Logic-to-Cost Mechanism}% via Büchi Automata and Monitors}

%For practical use in reinforcement learning, a simplified form called a limit-deterministic Büchi automaton (LDBA) is often employed, which restricts nondeterminism to an initial phase and transitions deterministically thereafter, improving runtime efficiency.

%: DeepLTL's 

%\fh{consider move zones and carla into appendix or experiments; rather, two others are missed.}

%\fh{give more illustrations}

\textbf{Büchi Automata and LDBA.}
Each LTL specification can be translated into a Büchi automaton (BA), a state-transition structure that monitors whether the agent’s event sequence satisfies the temporal rule.
The automaton reads the interaction trajectory, and satisfaction is achieved when designated accepting states are visited infinitely often during execution.
For reinforcement learning, we adopt a simplified variant called the limit-deterministic Büchi automaton (LDBA), which provides more predictable checking and improved computational efficiency.
The LDBA enables high-level temporal logic to be evaluated step by step during training, forming the basis for converting symbolic rules into numerical signals that guide policy learning.

\textbf{Monitors.} During training, temporal specifications are checked by runtime monitors that evolve synchronously with the environment.
Each monitor observes the trajectory and determines whether a specification $\phi_i$ is satisfied.
When a violation-related transition is detected, the monitor emits a nonnegative cost signal $c_t^{(i)}$, whose magnitude is determined by a rule-specific weight reflecting its relative importance.
Multiple monitors may generate costs simultaneously, and all violation costs are aggregated to guide policy optimization in the CMDP framework:
$c_t = \sum_{i=1}^K c^{(i)}_t$.
Safety-critical rules contribute larger costs, while goal-oriented requirements maintain elevated costs until the condition is satisfied.

\textbf{Reach-Avoid Decomposition.}
To further simplify policy optimization, each compiled automaton can be decomposed into a sequence of reach-avoid subtasks \cite{vaezipoor2021ltl2action,jackermeier2025deepltl}. 
Each subtask consists of (1) a reach condition: the state or event that must eventually occur, and (2) an avoid condition: the event that must never happen. 
For example, $\mathbf{F}(\texttt{goal}) \wedge \mathbf{G}\neg\texttt{collision}$ is decomposed into two subtasks: “always avoid collisions” and “eventually reach the goal.” 
This transformation reduces policy search complexity while preserving the original temporal semantics.

%To further simplify optimization, complex LTL formulas are decomposed into a sequence of reach-avoid subtasks \cite{vaezipoor2021ltl2action,jackermeier2025deepltl}. Each subtask specifies a goal that must eventually be achieved (reach) and a set of conditions that must always be avoided (avoid). For example, $\mathbf{F}(\texttt{goal}) \wedge \mathbf{G}\neg\texttt{collision}$ is reduced to two components: (1) always avoid collisions, and (2) eventually reach the goal region. This decomposition significantly reduces the policy search space while preserving the semantics of the original temporal logic.

%\subsection{Logic-to-Cost Mechanism}

%We now have encoded safety requirements as LTL formulas defined over atomic propositions that describe high-level events in the environment (e.g., ``collision occurred'', ``entered safe zone''). Each environment state is labeled with the corresponding propositions. 

%Any violations against the LTL specifications are transformed into per-step nonnegative cost signals that quantify the degree of violation. 

%During training, the environment returns $(s_{t+1}, r_t, c_t, \texttt{info})$, where $r_t$ represents task progress and $c_t$. %can be used as the constraint channel consumed by our PPO-LTL, which will be described in the following subsection. 
%This unified mechanism converts symbolic temporal requirements into structured numerical signals that guide policy optimization while ensuring compliance with safety constraints.

\textbf{Logic-to-Cost Mechanism.} The final environment feedback is therefore given by:
%\[
$(s_{t+1}, r_t, c_t, \texttt{info})$,
%\]
where $r_t$ encodes task performance, $c_t$ represents aggregated constraint costs, and \texttt{info} provides diagnostic information for each rule. This runtime logic-to-cost conversion is domain-agnostic and can be applied across diverse environments, %such as ZonesEnv and CARLA, 
guiding reinforcement learning toward policies that satisfy temporal logic constraints while optimizing performance.

\subsection{The Langragian Scheme in PPO-LTL}

%We adopt PPO given per-step costs derived from the Logic-to-Cost model. \fh{The agent seeks to maximize the expected discounted rewards while ensuring that the expected discounted costs remain below specified limits. This is formalized as\[\max_{\pi} J_R(\pi) \quad \text{s.t.} \quad J_C^{(k)}(\pi) \le d_k, \;\; \forall k \in \{1,\dots,K\},\]where $d_k$ is the safety budget for the $k$-th constraint. We apply Lagrangian relaxation by defining\[L(\pi, \lambda) = J_R(\pi) - \sum_{k=1}^K \lambda_k \big(J_C^{(k)}(\pi) - d_k\big),\]where $\lambda_k \geq 0$ are dual variables. } \fh{define it using our terminologies}
Given the per-step violation costs produced by the logic-to-cost mechanism, PPO-LTL incorporates them directly into the policy optimization process. 
%\fh{Instead of explicitly solving the constrained optimization problem described in Section~\ref{sec:cmdp},} we adopt a primal-dual approach 
We solve the constrained optimization problem %described in Section~\ref{sec:cmdp} 
using a primal-dual approach, where constraint information influences policy updates through a mixed advantage signal
\(
\hat A_{\text{mix}} = \hat A_r - \sum_{k=1}^K \lambda_k \hat A_c^{(k)}
\),
where $\hat A_r$ and $\hat A_c^{(k)}$ are generalized advantage estimates for reward and cost, respectively. 
After each PPO update, the multipliers are updated via projected gradient ascent:
\[
\lambda_k \leftarrow \max\Big(0,\ \lambda_k + \alpha_\lambda \big(\hat J_C^{(k)} - d_k\big)\Big).
\]
%This scheme allows the policy to adaptively balance safety constraints and reward optimization.
When costs exceed their pre-defined limits, $\lambda_k$ increases, strengthening the penalty applied to violations.
Conversely, when the costs remain within acceptable bounds, $\lambda_k$ decreases or stays constant, enabling the policy to continue improving task performance.
%Thus, PPO-LTL integrates symbolic logic constraints with gradient-based policy optimization in a principled and adaptive manner.

%\subsection{Theoretical Motivations}

\section{Theoretical Guarantee}

In this section, we analyze the convergence properties of PPO-LTL. %To account for the temporal nature of LTL constraints, w
We formulate the learning process within a Product MDP framework: %. Specifically, 
the state space is treated as an augmented space $\mathcal{S} = \mathcal{S}_{env} \times \mathcal{Q}$, where $\mathcal{S}_{env}$ is the environment state, and $\mathcal{Q}$ is the LDBA state. %By augmenting the state space, t
The temporally dependent LTL cost becomes strictly Markovian, allowing %us to 
defining the cost function as $c(s,a) = \sum_{k=1}^K w_k c^{(k)}(s,a)$. %Under this formulation
Correspondingly, PPO-LTL generates an iterate sequence $\{(\theta_t,\lambda_t)\}_{t\ge 0}$ as follows:
    %PPO-LTL generates an iterate sequence $\{(\theta_t,\lambda_t)\}_{t\ge 0}$ as follows:
    \begin{equation}\label{eq1}
        \theta_{t+1}=\Pi_{\Theta}(\theta_t+\alpha\hat{g}_t), \quad \lambda_{t+1}=\Pi_{[0,\Lambda]}(\lambda_t+\beta\hat{u}_t),
    \end{equation}
    where $\Pi$ denotes Euclidean projection, $\alpha$, $\beta>0$ are the learning rates, $\hat{g}_t$ is a stochastic ascent direction for the primal objective, and $\hat{u}_t$ is a stochastic estimate of the constraint residual $\hat{J}_C(\theta_t)-d$. %Moreover, the 
%    The policy-gradient signal is computed using a mixed advantage
%    \begin{equation*}
%$        \hat{A}_t^\text{mix}(s,a)=\hat{A}_t^r(s,a)-\lambda_t\hat{A}_t^c(s,a)$,
%    \end{equation*}
%    where $\hat{A}_t^r$ and $\hat{A}_t^c$ are reward and cost advantage estimates.
    We further abstract PPO-LTL in an inexact projected primal-dual framework with biased stochastic gradient oracles: %. In particular, 
    the clipped surrogate optimization and finite-epoch minibatch updates in PPO are modeled as producing biased stochastic estimates of the true Lagrangian gradient.
    For  given learning rates $\alpha, \beta>0$, define the primal and dual gradient mappings as follows:
    \begin{align*}
        \mathcal{G}(\theta,\lambda):=&\frac{1}{\alpha}\left(\Pi_{\Theta}\left(\theta+\alpha\nabla_\theta\mathcal{L}(\theta,\lambda)\right)-\theta\right),\\
        \mathcal{H}(\theta,\lambda):=&\frac{1}{\beta}\left(\Pi_{[0,\Lambda]}\left(\lambda+\beta(J_C(\theta)-d)\right)-\lambda\right),
    \end{align*}
    %These mappings 
    quantifying first-order stationarity of this problem under projection. 
    
    The theory relies on two mild assumptions on the primal domain and stochastic gradient, given in Sec \ref{appendix_theory}, following usual practice \cite{baxter2001infinite,schulman2017proximal,achiam2017constrained}. We prove the following theorem. 
    \begin{theorem}\label{thm1}
        Conditioned on Assumptions \ref{assume1} and \ref{assume2} and the learning rates $0<\alpha\le 1/(4L_{\mathcal{L}})$ and $\beta>0$, let $\{\theta_t,\lambda_t\}_{t\ge0}$ be defined by \eqref{eq1}. Then, for all $T\ge1$,
        \begin{align*}
            &\frac{1}{T}\sum_{t=0}^{T-1}\mathbb{E}[\|\mathcal{G}(\theta_t,\lambda_t)\|]\\\le&\sqrt{\frac{2(\Delta_{\mathcal{L}}+2\Lambda U_{\max})}{\alpha T}}+O\left(\sqrt{\sigma_{\theta}^2+a^2+\alpha}\right)\\
            &+O\left(\left(G_\text{max}^2+\sigma_{\theta}^2+a^2\right)^{1/4}\right),\\
            &\frac{1}{T}\sum_{t=0}^{T-1}\mathbb{E}[\|\mathcal{H}(\theta_t,\lambda_t)\|]\\\le& \sqrt{\frac{\Delta_\mathcal{L}}{\beta T}}+O\left(\sqrt{{U_{\max}^2+\sigma_{\lambda}^2+b^2}+\alpha^2+\frac{G_{\max}^2}{\beta}}\right)\\
            &+O\left(\frac{\alpha}{\sqrt{\beta}}\sqrt{G_{\max}^2+\sigma_{\theta}^2+a^2}\right),
        \end{align*}
        where $\Delta_\mathcal{L}=\sup_{\Theta\times[0,\Lambda]}\mathcal{L}(\theta,\lambda)-\inf_{\Theta\times[0,\Lambda]}\mathcal{L}(\theta,\lambda)<\infty$, $U_{\max} =\sup_{\theta\in\Theta}\left|J_C(\theta)-d\right|$, $G_{\max}=\sup_{(\theta,\lambda)}\|\nabla_\theta\mathcal{L}(\theta,\lambda)\|$, and $O(\cdot)$ hides problem-dependent constants independent of $T$.
    \end{theorem}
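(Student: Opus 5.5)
The approach is a standard inexact projected-gradient descent-lemma argument, applied separately to the primal and dual blocks and then telescoped. I take Assumptions~\ref{assume1} and~\ref{assume2} to provide three things. First, $\Theta$ is compact and convex, and $\mathcal{L}(\cdot,\lambda)$ is $L_{\mathcal{L}}$-smooth in $\theta$ uniformly over $\lambda\in[0,\Lambda]$. Second, the oracle decomposes as $\hat g_t=\nabla_\theta\mathcal{L}(\theta_t,\lambda_t)+b^\theta_t+\xi^\theta_t$ with $\|\mathbb{E}[b^\theta_t]\|\le a$ and $\mathbb{E}\|\xi^\theta_t\|^2\le\sigma_\theta^2$. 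Third, analogously, $\hat u_t=J_C(\theta_t)-d+b^\lambda_t+\xi^\lambda_t$ with bias level $b$ and variance $\sigma_\lambda^2$.

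\textbf{Primal block.} Write $\hat{\mathcal G}_t=(\theta_{t+1}-\theta_t)/\alpha$. By the projection inequality $\langle \hat g_t,\theta_{t+1}-\theta_t\rangle\ge \alpha\|\hat{\mathcal G}_t\|^2$. Applying $L_{\mathcal L}$-smoothness at fixed $\lambda_t$ gives
\[
\mathcal L(\theta_{t+1},\lambda_t)\ge \mathcal L(\theta_t,\lambda_t)+\alpha\|\hat{\mathcal G}_t\|^2-\alpha\langle b^\theta_t+\xi^\theta_t,\hat{\mathcal G}_t\rangle-\tfrac{L_{\mathcal L}\alpha^2}{2}\|\hat{\mathcal G}_t\|^2 .
\]
I then use Young's inequality on the error term together with $\alpha\le 1/(4L_{\mathcal L})$. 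This yields $\tfrac{\alpha}{2}\|\hat{\mathcal G}_t\|^2\le \mathcal L(\theta_{t+1},\lambda_t)-\mathcal L(\theta_t,\lambda_t)+C\alpha(\|b^\theta_t\|^2+\|\xi^\theta_t\|^2)$.

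To telescope, I need to pass from $\lambda_t$ to $\lambda_{t+1}$. Since $\mathcal L$ is linear in $\lambda$,
\[
\mathcal L(\theta_{t+1},\lambda_{t+1})-\mathcal L(\theta_{t+1},\lambda_t)=-(\lambda_{t+1}-\lambda_t)(J_C(\theta_{t+1})-d).
\]
The sum of these terms over $t$ can be handled by summation by parts. Because $\lambda_t\in[0,\Lambda]$ and $|J_C-d|\le U_{\max}$, this contributes at most $2\Lambda U_{\max}$ plus a term involving $\sum|J_C(\theta_{t+1})-J_C(\theta_t)|$. The latter is controlled by the $\alpha$-sized primal steps and is absorbed into the $O(\alpha)$ term. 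Summing over $t$ and dividing by $\alpha T/2$ then bounds $\frac1T\sum\mathbb E\|\hat{\mathcal G}_t\|^2$ by $\tfrac{2(\Delta_{\mathcal L}+2\Lambda U_{\max})}{\alpha T}+O(\sigma_\theta^2+a^2+\alpha)$.

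The next step passes from $\hat{\mathcal G}_t$ to the true mapping. Nonexpansiveness of $\Pi_\Theta$ gives $\|\mathcal G_t-\hat{\mathcal G}_t\|\le\|b^\theta_t+\xi^\theta_t\|$. However, I expect to need a cruder bound on the cross term $\langle\mathcal G_t,\hat{\mathcal G}_t-\mathcal G_t\rangle$, using $\|\mathcal G_t\|\le G_{\max}$. This gives $\mathbb E\|\mathcal G_t\|^2\le \mathbb E\|\hat{\mathcal G}_t\|^2+O\big(\sqrt{(G_{\max}^2+\sigma_\theta^2+a^2)(\sigma_\theta^2+a^2)}\big)$, and I suspect this is the source of the fourth-root term. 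Finally, Jensen's inequality ($\mathbb E\|\cdot\|\le\sqrt{\mathbb E\|\cdot\|^2}$) and $\sqrt{x+y}\le\sqrt x+\sqrt y$ give the first stated bound.

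\textbf{Dual block.} Here I use the same projected-step argument on the one-dimensional concave (linear) objective $-\lambda(J_C(\theta_t)-d)$, which I treat as a descent problem in $\lambda$. The $\lambda$-side uses the descent lemma with zero curvature. Any residual errors come from $\hat u_t$ (variance $\sigma_\lambda^2$, bias $b$) and from bounded increments $|\hat u_t|\lesssim U_{\max}$. This gives
\[
\frac{\beta}{2}\,\mathbb E|\hat{\mathcal H}_t|^2\le \mathbb E[\mathcal L(\theta_t,\lambda_t)-\mathcal L(\theta_t,\lambda_{t+1})]+O\big(\beta^2(U_{\max}^2+\sigma_\lambda^2+b^2)\big).
\]
Telescoping requires exchanging $\theta_t$ for $\theta_{t+1}$. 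This costs $|\mathcal L(\theta_{t+1},\cdot)-\mathcal L(\theta_t,\cdot)|\le (G_{\max}+\Lambda\,\mathrm{Lip}(J_C))\,\alpha\|\hat{\mathcal G}_t\|$. Bounding $\mathbb E\|\hat{\mathcal G}_t\|$ by $\sqrt{G_{\max}^2+\sigma_\theta^2+a^2}$, dividing by $\beta T$, and applying Jensen produces the terms $G_{\max}^2/\beta$, $\alpha^2$ and $\tfrac{\alpha}{\sqrt\beta}\sqrt{G_{\max}^2+\sigma_\theta^2+a^2}$. Converting $\hat{\mathcal H}$ to $\mathcal H$ again uses nonexpansiveness.

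\textbf{Main obstacle.} The delicate step is the coupling across blocks: the primal potential drifts because $\lambda$ moves, and vice versa. Because the theorem allows any $\beta>0$ with no coupling condition relative to $\alpha$, I do not expect a clean joint Lyapunov function. My first attempt will therefore be the crude uniform bounds above: $\Lambda U_{\max}$ for the primal drift and $G_{\max}$ for the dual drift. These are exactly what introduce the non-vanishing $O(\cdot)$ neighbourhood terms. If summation by parts does not close for the primal drift, I would instead bound it directly by $\sum_t|\lambda_{t+1}-\lambda_t|\,U_{\max}$. That bound is not summable, so in that case I would settle for its contribution appearing among the non-vanishing terms.
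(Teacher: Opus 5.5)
Your architecture matches the paper's. You use a per-block inexact projected-gradient inequality, handle the primal drift with linearity of $\mathcal L$ in $\lambda$ and summation by parts (boundary term $2\Lambda U_{\max}$), handle the dual drift with a crude $G_{\max}$ exchange cost and Young's inequality, and finish with Jensen.

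\textbf{The gap.} One step in the primal block is wrong as stated: the summation-by-parts remainder $\sum_t\lambda_t(u_{t+1}-u_t)$, with $u_t=J_C(\theta_t)-d$, is \emph{not} absorbed into an $O(\alpha)$ term. Set $G_C:=\sup_{\Theta}\|\nabla_\theta J_C\|$. Each summand is at most $\Lambda G_C\|\theta_{t+1}-\theta_t\|\le\Lambda G_C\,\alpha\|\hat g_t\|$, so the expected sum is of order $\alpha T\,\Lambda G_C\sqrt{G_{\max}^2+\sigma_\theta^2+a^2}$. After dividing by $\alpha T/2$, you are left with the $\alpha$-independent quantity $2\Lambda G_C\sqrt{2(G_{\max}^2+\sigma_\theta^2+a^2)}$. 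Small steps make each increment small, but not the normalized sum of $T$ of them. Hence your intermediate bound $\frac1T\sum_t\mathbb E\|\hat{\mathcal G}_t\|^2\le\frac{2(\Delta_{\mathcal L}+2\Lambda U_{\max})}{\alpha T}+O(\sigma_\theta^2+a^2+\alpha)$ is not established.

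The repair is simply to keep this term. After Jensen and $\sqrt{x+y}\le\sqrt x+\sqrt y$, it becomes exactly the $(G_{\max}^2+\sigma_\theta^2+a^2)^{1/4}$ term of the theorem. This, not the $\hat{\mathcal G}_t\to\mathcal G(\theta_t,\lambda_t)$ conversion, is where the fourth root comes from in the paper. Your fallback of bounding the drift by $\sum_t|\lambda_{t+1}-\lambda_t|\,U_{\max}$ would not give the stated primal bound, since it brings in $\beta/\alpha$ and $\sigma_\lambda$. Summation by parts does close; it just leaves an $O(1)$ remainder.

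\textbf{Smaller points.} First, the conversion needs no $G_{\max}$ cross term. The paper compares with the exact step $\tilde\theta_{t+1}=\Pi_\Theta(\theta_t+\alpha\nabla_\theta\mathcal L(\theta_t,\lambda_t))$. It obtains $\langle\nabla_\theta\mathcal L(\theta_t,\lambda_t),\theta_{t+1}-\theta_t\rangle\ge\frac\alpha2\|\mathcal G(\theta_t,\lambda_t)\|^2-\alpha\|\delta_t\|^2$ directly, where $\delta_t=\hat g_t-\nabla_\theta\mathcal L(\theta_t,\lambda_t)$. In your own route, use $\alpha\langle\delta_t,\hat{\mathcal G}_t\rangle\le\frac\alpha4\|\hat{\mathcal G}_t\|^2+\alpha\|\delta_t\|^2$ and $L_{\mathcal L}\alpha\le\frac14$. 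This actually gives $\frac{5\alpha}{8}\|\hat{\mathcal G}_t\|^2-\alpha\|\delta_t\|^2$, and then $\|\mathcal G(\theta_t,\lambda_t)\|^2\le\frac54\|\hat{\mathcal G}_t\|^2+5\|\delta_t\|^2$ preserves the leading constant.

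Second, in the dual block the error from $\varepsilon_t=\hat u_t-u_t$ is first order in $\beta$, not $O(\beta^2)$. The cross term $\beta\hat{\mathcal H}_t\varepsilon_t$ does not vanish in expectation, because $\hat{\mathcal H}_t$ depends on $\varepsilon_t$ and $\varepsilon_t$ has conditional mean $b_t$. With the correct $O(\beta(\cdot))$ error, dividing by $\beta T$ gives the $\beta$-independent $O(U_{\max}^2+\sigma_\lambda^2+b^2)$ term of the statement.

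Third, starting from $\frac\beta2|\hat{\mathcal H}_t|^2$ and then converting yields $\sqrt{c\,\Delta_{\mathcal L}/(\beta T)}$ with some $c>1$. The paper obtains constant one by comparing with $\tilde\lambda_{t+1}=\Pi_{[0,\Lambda]}(\lambda_t+\beta u_t)$, for which $(\tilde\lambda_{t+1}-\lambda_t)u_t\ge\beta\,\mathcal H(\theta_t,\lambda_t)^2$. The price is an error $\beta|\varepsilon_t||u_t|$, which is where $U_{\max}^2$ enters.

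Your $\theta_t\to\theta_{t+1}$ exchange at cost $G_{\max}\|\theta_{t+1}-\theta_t\|$ is equivalent to the paper's $A_t+B_t$ split and is fine.
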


%\textbf{Remark.} 

A detailed proof is given in Section~\ref{appendix_theory}.
Theorem \ref{thm1} establishes an ergodic stationarity guarantee for the projected primal-dual dynamics underlying PPO-LTL. It demonstrates that despite the biased and noisy gradient estimates inherent to PPO (e.g., due to clipping and minibatch sampling, captured by the variance and bias terms $\sigma$ and $a, b$), the algorithm reliably converges to a neighborhood of the stationary point. In practice, this implies that PPO-LTL can stably minimize constraint violations without requiring exact gradient computation, confirming its robustness in complex environments like autonomous driving.

\section{Experiments}

%\subsection{Assumptions} 
%We assume that the environment can be modeled as a CMDP and that the agent has full access to the relevant state variables needed to define atomic propositions. 
%We further assume that safety and task objectives can be expressed as LTL specifications over these propositions, and that they can be compiled into equivalent LDBAs whose transitions evolve synchronously with the environment dynamics. 
%Finally, we assume that the resulting cost signals are suitable for gradient-based optimization and that Lagrangian dual updates converge under standard conditions.

%\fh{a paragraph here}

We conduct extensive experiments on ZonesEnv and CARLA, which fully support our algorithm. 
\textbf{Baselines and Comparison Methods.}
PPO is used as an unconstrained baseline. 
TIRL-PPO and TIRL-SAC are included as standard Safe RL baselines to evaluate the performance of alternative constrained optimization techniques. 
PPO-Mask is a heuristic safety filter that preemptively overrides imminent unsafe actions with predefined safe fallbacks (e.g., hard braking). We include this to illustrate the limitations (e.g., deadlocks and over-conservatism) of purely reactive, rule-based interventions. 
PPO-Shielding is our main competitor in the experiments.
PPO-Lagrangian is included as a standard constrained RL method. 
For fair comparison, all methods use a CNN backbone, consisting of 6 convolutional layers with ReLU activations, followed by policy and value networks with 2-layer MLPs containing [500, 300] units each. 

%\textbf{ZonesEnv.} 
%Full environment configuration and network details are provided in Appendix %~\ref{app:implementation}.
%We further validate PPO-LTL in the CARLA 
%and a
%Detailed settings and training pipelines are described in Appendix %~\ref{app:implementation}.
%Network architectures and curriculum learning schedules follow standard PPO settings. 
%LTL monitors map events (e.g., collisions, lane invasions) to deterministic cost signals, which are integrated with rewards during PPO-Lagrangian updates. 
\begin{comment}
\textbf{Environments.} We employ (1) \textbf{ZonesEnv:} a logical grid-world environment (ZonesEnv) built on Safety Gymnasium \cite{ji2024omnisafe}, where the agent (a point robot) navigates a 2D plane divided into colored zones representing atomic propositions. LTL specifications encode objectives such as ``avoid blue until reaching green and eventually visit yellow.'' Wall collisions terminate episodes with penalties, and violation costs are generated based on whether the trajectory satisfies the specification. 
(2) \textbf{CARLA:} an autonomous driving simulator \cite{dosovitskiy2017carla}, using Town02 with realistic intersections, traffic lights, and dynamic vehicles. Observations include semantic segmentation and ego-state features, and the one-hot encoded LDBA state to ensure the Markov property.
Actions control steering, throttle, and brake. 
Episodes terminate upon collision, large deviation, or route completion. CARLA experiments used CARLA 0.9.13 with API and off-screen rendering. 
\end{comment}

\begin{table}[t]
\centering
\caption{Results in ZonesEnv.} 
\label{tab:zones}
\resizebox{0.47\textwidth}{!}{
\begin{tabular}{l|ccc}
\toprule
\textbf{Method} & \textbf{Reward} & \textbf{Hit Wall Rate} & \textbf{$\lambda$} \\
\midrule
PPO & 17.95 $\pm$ 0.84 & 3.7 $\pm$ 2.1\% & - \\
PPO-Mask & 10.14 $\pm$ 2.82 & 6.0 $\pm$ 2.0\% & - \\
PPO-Shielding & 15.92 $\pm$ 0.55 & 12.0 $\pm$ 2.0\% & - \\
PPO-Lagrangian & 23.23 $\pm$ 1.51 & 6.0 $\pm$ 2.0\% & 0.00 \\
\midrule
PPO-LTL-A & 17.86 $\pm$ 0.73 & 4.3 $\pm$ 3.5\% & 0.0048 \\
PPO-LTL-B & {18.61 $\pm$ 0.67} & 4.7 $\pm$ 3.1\% & 0.0018 \\
\bottomrule
\end{tabular}
}
\end{table}

% ========== Main Training Results Table ==========
\begin{table*}[t!]
\centering
\caption{Results in CARLA. Arrows ($\uparrow, \downarrow$) indicate the direction where values are strictly better.} 
\label{tab:main_training}
\resizebox{0.9\textwidth}{!}{
\begin{tabular}{l|cc|ccccccc}
\toprule
\textbf{Methods} & \textbf{Collision Rate $\downarrow$} & \textbf{Routes $\uparrow$} & \textbf{Distance} & \textbf{Speed} & \textbf{Length} & \textbf{Col. Num} & \textbf{Cent. Dev} & \textbf{Cost} & \textbf{$\lambda$} \\
\midrule
PPO & 0.262 $\pm$ 0.115 & 0.013 $\pm$ 0.008 & 6.58 & 0.45 & 4030 & 8.0 & 0.77 & - & - \\
TIRL-PPO & 0.173 $\pm$ 0.129 & 0.083 $\pm$ 0.111 & 6.37 & 0.07 & 9460 & 7.3 & 0.87 & - & - \\
TIRL-SAC & 0.336 $\pm$ 0.063 & 0.027 $\pm$ 0.011 & 6.92 & 0.92 & 926 & 57.3 & 0.12 & - & - \\
PPO-Mask & 0.408 $\pm$ 0.058 & 0.010 $\pm$ 0.012 & 2.38 & 1.06 & 1338 & 38.0 & 0.37 & - & - \\
PPO-Shielding & 0.267 $\pm$ 0.056 & 0.072 $\pm$ 0.036 & 18.87 & 10.63 & 93 & 164.3 & 0.52 & - & - \\
PPO-Lagrangian & 0.233 $\pm$ 0.089 & 0.077 $\pm$ 0.051 & 7.78 & 0.96 & 6616 & 103.0 & 0.60 & 0.00 & 0.01 \\
\midrule
PPO-LTL-A & \textbf{0.143 $\pm$ 0.110} & 0.077 $\pm$ 0.107 & 5.96 & 3.57 & 703 & 154.3 & 0.52 & 0.25 & 0.03 \\
PPO-LTL-B & 0.170 $\pm$ 0.148 & \textbf{0.236 $\pm$ 0.037} & 12.79 & 1.48 & 4859 & 10.0 & 0.73 & 0.08 & 0.00 \\
\bottomrule
\end{tabular}
}
\end{table*}

% ========== Comprehensive Ablation, Sensitivity, and Extensive Configurations Table ==========
\begin{table*}[t!]
\centering
\caption{Ablation studies and sensitivity analyses in CARLA.} %Arrows ($\uparrow, \downarrow$) indicate ndicate the direction where values are strictly better.}
%(cost limits and dual update rates), diverse constraint formulations (including standard Lagrangian baselines). 
% core performance metrics. Other metrics characterize driving behavior and are reported as means.}
\label{tab:comprehensive_eval}
\resizebox{0.9\textwidth}{!}{
\begin{tabular}{l|cc|ccccccc}
\toprule
\textbf{Configuration} & \textbf{Collision Rate $\downarrow$} & \textbf{Routes $\uparrow$} & \textbf{Distance} & \textbf{Speed} & \textbf{Length} & \textbf{Col. Num} & \textbf{Cent. Dev} & \textbf{Cost} & \textbf{$\lambda$} \\
\midrule
\multicolumn{10}{c}{\textit{Ablation of LTL Components}} \\
\midrule
No collision & \textbf{0.159 $\pm$ 0.027} & 0.009 $\pm$ 0.003 & 2.71 & 0.68 & 4123 & 11.0 & 0.64 & 0.07 & 0.00 \\
No off-track & 0.275 $\pm$ 0.129 & 0.032 $\pm$ 0.044 & 3.29 & 0.06 & 10501 & 5.7 & 0.62 & 0.07 & 0.00 \\
No lane invasion & 0.207 $\pm$ 0.046 & \textbf{0.024 $\pm$ 0.009} & 6.17 & 1.98 & 14323 & 21.7 & 0.57 & 0.06 & 0.01 \\
\midrule
\multicolumn{10}{c}{\textit{Sensitivity of Cost Limit (cl) and Dual Learning Rate ($\alpha_\lambda$)}} \\
\midrule
cl = 0.001 & 0.229 $\pm$ 0.153 & 0.033 $\pm$ 0.035 & 5.19 & 1.10 & 4749 & 21.0 & 0.78 & 0.06 & 0.01 \\
cl = 0.05 & 0.273 $\pm$ 0.199 & 0.020 $\pm$ 0.023 & 3.04 & 4.65 & 907 & 228.7 & 0.56 & 0.22 & 0.71 \\
cl = 0.5 & 0.395 $\pm$ 0.130 & 0.045 $\pm$ 0.059 & 4.86 & 0.09 & 6614 & 11.7 & 0.91 & 0.01 & 0.00 \\
%\midrule
%\multicolumn{10}{c}{\textit{Sensitivity of Dual Learning Rate ($\alpha_\lambda$)}} \\
%\midrule
$\alpha_\lambda$ = 0.00001 & \textbf{0.160 $\pm$ 0.059} & 0.062 $\pm$ 0.063 & 8.67 & 0.89 & 2957 & 11.7 & 0.95 & 0.01 & 0.00 \\
$\alpha_\lambda$ = 0.0001 & 0.243 $\pm$ 0.081 & 0.053 $\pm$ 0.033 & 8.98 & 0.54 & 3029 & 9.7 & 0.63 & 0.07 & 0.00 \\
$\alpha_\lambda$ = 0.01 & 0.233 $\pm$ 0.097 & \textbf{0.099 $\pm$ 0.065} & 8.28 & 2.05 & 3776 & 8.3 & 0.86 & 0.06 & 0.03 \\
\midrule
\multicolumn{10}{c}{\textit{Constraint Strictness \& Mixed Formulations}} \\
\midrule
Relaxed high & 0.258 $\pm$ 0.067 & 0.042 $\pm$ 0.040 & 4.51 & 0.09 & 3380 & 13.3 & 0.62 & 0.01 & 0.00 \\
Relaxed moderate & 0.205 $\pm$ 0.100 & 0.056 $\pm$ 0.053 & 3.79 & 1.50 & 2787 & 53.7 & 0.53 & 0.01 & 0.00 \\
Collision focused & 0.233 $\pm$ 0.206 & 0.010 $\pm$ 0.008 & 0.07 & 0.22 & 3020 & 15.7 & 0.39 & 0.00 & 0.00 \\
Ultra loose & 0.393 $\pm$ 0.110 & 0.141 $\pm$ 0.098 & 8.81 & 1.04 & 3996 & 18.0 & 0.77 & 0.00 & 0.00 \\
Mixed light & 0.282 $\pm$ 0.076 & 0.042 $\pm$ 0.056 & 3.37 & 0.64 & 4218 & 18.0 & 0.90 & 0.01 & 0.00 \\
Mixed light loose & 0.242 $\pm$ 0.018 & 0.041 $\pm$ 0.033 & 2.86 & 0.26 & 5728 & 7.0 & 0.48 & 0.01 & 0.00 \\
Mixed medium & 0.229 $\pm$ 0.131 & 0.044 $\pm$ 0.029 & 5.62 & 3.69 & 3248 & 50.7 & 0.81 & 0.06 & 0.00 \\
\midrule
\multicolumn{10}{c}{\textit{Collision-Only Variants Without Temporal LTL}} \\
\midrule
Col-only (cl=0.5) & 0.243 $\pm$ 0.079 & 0.033 $\pm$ 0.043 & 4.93 & 0.52 & 4155 & 11.3 & 1.04 & 0.00 & 0.00 \\
Col-only (cl=0.1) & 0.248 $\pm$ 0.228 & 0.028 $\pm$ 0.030 & 6.10 & 0.40 & 3573 & 14.3 & 0.90 & 0.00 & 0.00 \\
Col-only mid & 0.346 $\pm$ 0.175 & 0.158 $\pm$ 0.200 & 9.07 & 2.07 & 1584 & 52.0 & 0.86 & 0.01 & 0.00 \\
Col-only loose & 0.193 $\pm$ 0.081 & \textbf{0.269 $\pm$ 0.297} & 12.66 & 0.93 & 9531 & 9.3 & 0.56 & 0.00 & 0.00 \\
\bottomrule
\end{tabular}
}
\end{table*}

\textbf{Environments and Implementation.} We evaluate on two environments using a 256-dimensional CustomMultiInputExtractor \cite{raffin2021stable} and 3 random seeds.
(1) \textbf{ZonesEnv:} A Safety Gymnasium \cite{ji2024omnisafe} grid-world where a point robot navigates colored zones representing atomic propositions (e.g., ``avoid blue until green''). Wall collisions yield penalties, and LTL violations generate costs. Models are trained for 200k steps.
(2) \textbf{CARLA:} An autonomous driving simulator \cite{dosovitskiy2017carla} (v0.9.13, Town02). Observations include semantic segmentation, ego-states, and the one-hot LDBA state. LTL monitors map events to generate deterministic costs for PPO-Lagrangian updates. Models are trained for 100k steps.

 %\fh{formula / references}
 %All evaluations use identical environment conditions and route configurations to ensure fair comparison across algorithms.
\textbf{Evaluation Metrics.}
We assess both task performance and constraint satisfaction. Safety metrics include violation rates for each constraint type (collision, off-track, lane invasion, heading, weaving, overspeed, steering jerk), computed as the ratio of violation events to total steps (e.g., $\text{VR} = N_{\text{vio}} / N_{\text{total}}$) \cite{ray2019benchmarking}. 
Task metrics include route completion rate (RCR), average speed $\bar{v}$, and total distance traveled $D_{\text{total}}$, where $\text{RCR} = d_{\text{completed}} / d_{\text{target}}$. 
Constraint dynamics are measured by the average episodic cost $\hat{J}_C = \frac{1}{N} \sum_i C_i$, the final Lagrange multiplier $\lambda^*$, and the convergence behavior over training.

\begin{comment}
\textbf{Implementation Details.} 
Feature extraction, via CustomMultiInputExtractor \cite{raffin2021stable}, outputs 256-dimensional representations for downstream processing.
The main experiments, including ablation and sensitivity analysis, are implemented using 200,000 training steps for ZonesEnv and 100,000 training steps for CARLA, each with 3 random seeds.
\end{comment}

%Experiments were conducted on a single NVIDIA GeForce RTX 4090 GPU. Full environment configuration, network architecture details, training pipelines, and hyperparameters are provided in Appendix.%~\ref{sec:additional_experiments} of the supplementary material.
%Core hyperparameters of CARLA environment are summarized in Table~\ref{tab:hyperparams}. 

%\textbf{Hardware.} Experiments were conducted on a single NVIDIA GeForce RTX 4090 GPU. 

%\subsection{Comparison Results} 

%for the PPO, PPO-Shielding, and PPO-LTL variants. }

%performance metrics (safety and liveness) 
%(reported with $\pm$ standard deviation). Other metrics characterize the agent's driving behavior and are reported as means.}

\textbf{Comparison Results in ZonesEnv.} 
Table~\ref{tab:zones} presents performance across 3 seeds. PPO-LTL-A prioritizes stability, while PPO-LTL-B slightly relaxes constraints. Baselines exhibit distinct flaws: heuristic PPO-Mask severely restricts exploration 10.14 reward, while PPO-Shielding struggles with continuous dynamics, yielding the highest hit-wall rate 12.0\%. Although PPO-Lagrangian achieves the highest apparent reward 23.23, this is deceptive; lacking LTL memory, it ignores temporal rules and incurs a massive unshown violation cost of 56.98. Standard PPO maintains a low hit-wall rate 3.7\% but cannot enforce complex temporal specifications. In contrast, both PPO-LTL variants provide well-balanced policies. They significantly outperform Mask and Shielding in valid rewards while strictly adhering to LTL constraints with competitive hit-wall rates 4.3\% and 4.7\%. %Aligning with CARLA results, PPO-LTL prevents agents from blindly exploiting rewards at the expense of logical safety.

\textbf{Comparison Results in CARLA.}
Table~\ref{tab:main_training} compares PPO-LTL-A (strict cost limit 0.02) and PPO-LTL-B (relaxed limit 0.1) against baselines. PPO fails to balance safety and liveness. Standard Safe RL baselines exhibit severe pathologies: TIRL-PPO suffers from the freezing robot problem (near-zero speed despite 9460-step survival), while TIRL-SAC fails to converge safely (0.336 collision rate).
Furthermore, PPO-Shielding shows a reckless driving pattern: despite deceptive high speeds, it crashes rapidly (93-step length, 164.3 collisions) with minimal route completion (0.072). Conversely, PPO-Mask's sudden stops cause conservative deadlocks (2.38 distance) and ironically higher collisions (0.408). PPO-Lagrangian's lack of temporal foresight limits progress (7.78 distance).
In contrast, PPO-LTL balances proactive safety and task liveness, avoiding both over-conservatism and reckless speed. PPO-LTL-A achieves the lowest collision rate (0.143, a 45\% reduction over standard PPO). PPO-LTL-B achieves the highest route completion (0.236) and maintains long, stable episodes.

\begin{table}[t]
\centering
\caption{Sensitivity analysis of cost limit and Lagrangian learning rate ($\alpha_\lambda$) in ZonesEnv for PPO-LTL.}
\label{tab:zones_sens}
\resizebox{0.49\textwidth}{!}{
\begin{tabular}{cc|ccc}
\toprule
\textbf{Cost Limit} & $\alpha_\lambda$ & \textbf{Reward} $\uparrow$ & \textbf{Hit Wall Rate} $\downarrow$  & $\lambda$ \\
\midrule
0.03 & 0.008 & 18.04 $\pm$ 1.30 & 6.7 $\pm$ 1.2\% & 0.0072 \\
0.05 & 0.010 & 17.86 $\pm$ 0.73 & \textbf{4.3 $\pm$ 3.5\%} & 0.0048 \\
0.07 & 0.015 & \textbf{18.61 $\pm$ 0.67} & 4.7 $\pm$ 3.1\% & 0.0018 \\
0.10 & 0.020 & 18.35 $\pm$ 1.42  & 5.7 $\pm$ 2.1\% & 0.0002 \\
\bottomrule
\end{tabular}
}
\end{table}

\textbf{Ablation Study and Sensitivity Analysis.}
Table~\ref{tab:comprehensive_eval} evaluates diverse constraint configurations, constraint contributions through systematic removal, and hyperparameter robustness on the CARLA environment. 
The results verify the necessity of carefully balancing LTL constraints: simplistic or overly relaxed bounds can easily induce naive speed or reckless driving, while appropriately tuned temporal logic ensures safe and effective task execution. Furthermore, removing individual constraints confirms that multi-component LTL constraints are essential for balanced driving performance. 
Table~\ref{tab:zones_sens} presents a sensitivity analysis of PPO-LTL across varying cost limits and Lagrangian learning rates on ZonesEnv. 
Across a threefold range of cost limits, the framework maintains stable behavior, demonstrating that the constraint mechanism provides interpretable and consistent control over policy behavior.

\textbf{Computational Costs.} 
PPO-LTL incurs negligible overhead compared to standard PPO. Over 3 random seeds, training PPO-LTL (vs. PPO) took 235.3s $\pm$ 0.9s (vs. 226.3s $\pm$ 0.5s) for 200k steps in ZonesEnv, and 2557.3s $\pm$ 97.8s (vs. 2536.3s $\pm$ 39.0s) for 100k steps in CARLA. This confirms that LTL monitoring and Lagrangian dual updates introduce minimal computational burden, maintaining practical efficiency for real-world applications.

%% file: sections/Appendices.tex
\section{Proof}\label{appendix_theory}

    \noindent

{}    
        \begin{assumption}\label{assume1}
        The primal domain $\Theta$ is compact and convex, which models the bounded iterates induced by trust-region regularization, clipping, or explicit projection in \eqref{eq1}. The value functions $J_R(\theta)$ and $J_C(\theta)$ are continuously differentiable on $\Theta$, and their gradients are Lipschitz:
        \begin{gather*} 
        \|\nabla_\theta J_R(\theta)-\nabla_\theta J_R(\theta')\|\le L_R\|\theta-\theta'\|\\%\] and $
        \|\nabla_\theta J_C(\theta)-\nabla_\theta J_C(\theta')\|\le L_C\|\theta-\theta'\|.
                \end{gather*}
Thus, for any $\lambda\in [0,\Lambda]$, the Lagrangian $\mathcal{L}(\theta,\lambda)$ is $L_{\mathcal{L}}$-smooth in $\theta$ with $L_{\mathcal{L}}=L_R+\Lambda L_C$. 
    \end{assumption}
    \begin{assumption}\label{assume2}
        Let $\mathcal{F}_t$ denote the $\sigma$-field generated by all randomness up to the beginning of $t$. The estimate $\hat{g}_t$ satisfies \[\mathbb{E}[\hat{g}_t|\mathcal{F}_t]=\nabla_\theta\mathcal{L}(\theta_t,\lambda_t)+a_t,\] 
        where $\|a_t\|\le a$, and \[\mathbb{E}[\|\hat{g}_t-\mathbb{E}[\hat{g}_t|\mathcal{F}_t]\|^2|\mathcal{F}_t]\le \sigma_\theta^2.\] The dual signal $\hat{u}_t$ satisfies $\mathbb{E}[\hat{u}_t|\mathcal{F}_t]=(J_C(\theta_t)-d)+b_t$ where $\|b_t\|\le b$, and has bounded variance $\mathbb{E}[(\hat{u}_t-\mathbb{E}[\hat{u}_t|\mathcal{F}_t])^2|\mathcal{F}_t]\le \sigma_\lambda^2$. The bias terms $a_t$ and $b_t$ capture approximation effects including clipping mismatch, minibatch updates and estimation errors.
    \end{assumption}

%    \fh{The proof techniques were partially inspired by \cite{xx}.}
\noindent
\textit{Proof.}
        For a fixed $t$, we define the true gradient $\tilde{\theta}_{t+1}:=\Pi_{\Theta}(\theta_t+\alpha\nabla_\theta\mathcal{L}(\theta_t,\lambda_t))$. Thus, it holds $\tilde{\theta}_{t+1}-\theta_t=\alpha\mathcal{G}(\theta_t,\lambda_t)$. By Assumption \ref{assume1}, we have
        \begin{align*}
            \mathcal{L}(\tilde{\theta}_{t+1},\lambda_t)\ge& \mathcal{L}(\theta_t,\lambda_t)+\langle\nabla_\theta\mathcal{L}(\theta_t,\lambda_t),\tilde{\theta}_{t+1}-\theta_t\rangle\\&-\frac{L_{\mathcal{L}}}{2}\left\|\tilde{\theta}_{t+1}-\theta_t\right\|^2.
        \end{align*}
        Since for all $\theta$, $\langle\tilde{\theta}_{t+1}-(\theta_t+\alpha\nabla_\theta\mathcal{L}(\theta_t,\lambda_t)),\theta_t-\tilde{\theta}_{t+1}\rangle \ge 0$, choosing $\theta=\theta_t$ gives
%        \begin{equation*}
            \[\langle\nabla_\theta\mathcal{L}(\theta_t,\lambda_t),\tilde{\theta}_{t+1}-\theta_t\rangle\ge \frac{1}{\alpha}\|\tilde{\theta}_{t+1}-\theta_t\|^2.\]
%        \end{equation*}
        Then it yields
        \begin{align*}
            \mathcal{L}(\tilde{\theta}_{t+1},\lambda_t)-\mathcal{L}(\theta_t,\lambda_t)\ge&\left(\frac{1}{\alpha}-\frac{L_{\mathcal{L}}}{2}\right)\|\tilde{\theta}_{t+1}-\theta_t\|^2,\\
            =&\alpha\left(1-\frac{\alpha L_{\mathcal{L}}}{2}\right)\|\mathcal{G}(\theta_t,\lambda_t)\|^2.
        \end{align*}
        Under $\alpha<\frac{1}{4L_{\mathcal{L}}}$, we have $1-\frac{\alpha L_{\mathcal{L}}}{2}\ge \frac{7}{8}$, and hence
        \begin{equation}\label{eq2}
            \mathcal{L}(\tilde{\theta}_{t+1},\lambda_t)-\mathcal{L}(\theta_t,\lambda_t)\ge \frac{7}{8}\alpha\left\|\mathcal{G}(\theta_t,\lambda_t)\right\|^2.
        \end{equation}
        Now let $\delta_t:=\hat{g}_t-\nabla_\theta\mathcal{L}(\theta_t,\lambda_t)$ denote the gap between $\hat{g}_t$ and the gradient of $\mathcal{L}(\theta_t,\lambda_t)$. Thus, it holds $\|\theta_{t+1}-\tilde{\theta}_{t+1}\|\le \alpha\|\delta_t\|$. Since \(\theta_{t+1}=\Pi_{\Theta}(\theta_t+\alpha(\nabla_\theta\mathcal{L}(\theta_t,\lambda_t)+\delta_t))\), for all $\theta\in\Theta$, we have
%        \begin{equation*}
$            \left\langle \theta_{t+1}-\left(\theta_t+\alpha(\nabla_\theta\mathcal{L}(\theta_t,\lambda_t)+\delta_t)\right),\theta-\theta_{t+1} \right\rangle\ge 0$.
%        \end{equation*}
        Choose $\theta=\tilde{\theta}_{t+1}$, then,
        \begin{align}\label{eq3}
            &\langle\nabla_\theta\mathcal{L}(\theta_t,\lambda_t)+\delta_t, \theta_{t+1}-\tilde{\theta}_{t+1}\rangle\\\nonumber
            \ge& \frac{1}{\alpha}\langle\theta_{t+1}-\theta_t,\theta_{t+1}-\tilde{\theta}_{t+1} \rangle.
        \end{align}
        Similarly, from the definition of $\tilde{\theta}_{t+1}$,
        \begin{equation}\label{eq4}
            \langle\nabla_\theta\mathcal{L}(\theta_t,\lambda_t), \tilde{\theta}_{t+1}-\theta_{t+1}\rangle\ge \frac{1}{\alpha}\langle\tilde{\theta}_{t+1}-\theta_t,\tilde{\theta}_{t+1}-\theta_{t+1} \rangle.
        \end{equation}

        We decompose 
        \begin{align*}
            &\langle \nabla_{\theta}\mathcal{L}(\theta_t,\lambda_t),\theta_{t+1}-\theta_t \rangle\\
            =&\langle \nabla_{\theta}\mathcal{L}(\theta_t,\lambda_t),\tilde{\theta}_{t+1}-\theta_t \rangle+\langle \nabla_{\theta}\mathcal{L}(\theta_t,\lambda_t),\theta_{t+1}-\tilde{\theta}_{t+1} \rangle.
        \end{align*}
For the second term, rearranging \eqref{eq3} yields $\langle\nabla_\theta\mathcal{L}(\theta_t,\lambda_t), \theta_{t+1}-\tilde{\theta}_{t+1}\rangle \ge \frac{1}{\alpha}\langle\theta_{t+1}-\theta_t,\theta_{t+1}-\tilde{\theta}_{t+1} \rangle - \langle\delta_t,\theta_{t+1}-\tilde{\theta}_{t+1}\rangle$. Combining these terms and using the geometric identity $\langle x, x-y\rangle = \frac{1}{2}(\|x\|^2+\|x-y\|^2-\|y\|^2)$ yields
        \begin{align}\label{eq5}
            &\langle\nabla_\theta\mathcal{L}(\theta_t,\lambda_t), \theta_{t+1}-\theta_{t}\rangle\ge \frac{1}{2\alpha}\|\tilde{\theta}_{t+1}-\theta_t\|^2\nonumber\\
            &-\langle\delta_t,\theta_{t+1}-\tilde{\theta}_{t+1}\rangle
            \ge \frac{\alpha}{2}\|\mathcal{G}(\theta_t,\lambda_t)\|^2-\alpha\|\delta_t\|^2.
        \end{align}
        By $L_{\mathcal{L}}$-smoothness again, for any $x,y\in\Theta$, 
        \[\mathcal{L}(x,\lambda_t)\ge\mathcal{L}(y,\lambda_t)+\langle\nabla_\theta\mathcal{L}(y,\lambda_t),x-y\rangle-\frac{L_{\mathcal{L}}}{2}\|x-y\|^2.\] 
        Choosing $x=\theta_{t+1}$ and $y=\theta_t$, we have 
        \begin{align*}            
        \mathcal{L}(\theta_{t+1},\lambda_t)\ge&\mathcal{L}(\theta_t,\lambda_t)+\langle\nabla_\theta\mathcal{L}(\theta_t,\lambda_t),\theta_{t+1}\\
        &-\theta_t\rangle-\frac{L_{\mathcal{L}}}{2}\|\theta_{t+1}-\theta_t\|^2.
        \end{align*}
        Using \eqref{eq5} and $\|\theta_{t+1}-\theta_t\|\le \alpha(\|\nabla_\theta\mathcal{L}(\theta_t,\lambda_t)\|+\|\delta_t\|)$, we obtain
        \begin{align*}        
        \mathcal{L}(\theta_{t+1},\lambda_t)-&\mathcal{L}(\theta_t,\lambda_t)\ge \frac{\alpha}{2}\|\mathcal{G}(\theta_t,\lambda_t)\|^2-\alpha\|\delta_t\|^2\\
        &-L_{\mathcal{L}}\alpha^2\left(\|\nabla_\theta\mathcal{L}(\theta_t,\lambda_t)\|^2+\|\delta_t\|^2\right).
\end{align*}
        Since $\Theta\times[0,\Lambda]$ is compact and $\nabla_\theta \mathcal{L}$ is continuous, there exists $G_\text{max}<\infty$ such that $\|\nabla_\theta\mathcal{L}\|\le G_\text{max}$. Thus, it yields
        %\begin{align*}\label{eq9}
            \[\mathcal{L}(\theta_{t+1},\lambda_t)-\mathcal{L}(\theta_t,\lambda_t)\ge \frac{\alpha}{2}\|\mathcal{G}(\theta_t,\lambda_t)\|^2-C_1\alpha\|\delta_t\|^2-C_2\alpha^2,\]
%        \end{align*}
        where $C_1$, $C_2>0$ are constants independent of $T$.
        Rearranging %\eqref{eq9}, 
        and summing it from $t=0$ to $T-1$ and taking expectations gives
        \begin{align*}
            \frac{\alpha}{2} \sum_{t=0}^{T-1}\mathbb{E}\left[\left\|\mathcal{G}(\theta_t,\lambda_t)\right\|^2\right]\le& \mathbb{E}\left[\sum_{t=0}^{T-1}\left(\mathcal{L}(\theta_{t+1},\lambda_t)-\mathcal{L}(\theta_{t},\lambda_t)\right)\right]\\
            &+C_1\alpha\sum_{t=0}^{T-1}\mathbb{E}\left\|\delta_t\right\|^2+C_2T\alpha^2.
        \end{align*}
        Moreover, 
        \begin{align*}
        &\sum_{t=0}^{T-1}(\mathcal{L}(\theta_{t+1},\lambda_t)-\mathcal{L}(\theta_{t},\lambda_t))\\
        =&\mathcal{L}(\theta_{T},\lambda_{T-1})-\mathcal{L}(\theta_{0},\lambda_0)+\sum_{t=1}^{T-1}(\mathcal{L}(\theta_{t},\lambda_{t-1})-\mathcal{L}(\theta_{t},\lambda_t)).
        \end{align*}
We define the true dual ascent direction $u_t:=J_C(\theta_t)-d$, get the corresponding projection $\tilde{\lambda}_{t+1}:=\Pi_{[0,\Lambda]}(\lambda_t+\beta u_t)$, and recall $\mathcal{L}(\theta,\lambda)=J_R(\theta)-\lambda(J_C(\theta)- d)$, the second term becomes
        $\mathcal{L}(\theta_{t},\lambda_{t-1})-\mathcal{L}(\theta_{t},\lambda_t)=(\lambda_{t}-\lambda_{t-1})u_t$.
        Define $S_T:=\sum_{t=1}^{T-1}(\lambda_{t}-\lambda_{t-1})u_t$ and it yields
        \begin{equation*}
            S_T=\lambda_{T-1}u_{T-1}-\lambda_{0}u_{1}-\sum_{t=1}^{T-2}\lambda_{t}\left(u_{t+1}-u_t\right).
        \end{equation*}
        Since $J_C(\theta)$ is bounded on $\Theta$, there exists $U_\text{max}<\infty$ such that $|u_t|\le U_\text{max}$ for all $t$. Therefore,
%        \begin{equation*}
$            \left|\lambda_{T-1}u_{T-1}-\lambda_{0}u_{1}\right|\le 2\Lambda U_\text{max}$.
%        \end{equation*}
        Since $\Theta$ is compact and $\nabla_\theta J_C(\theta)$ is continuous, there exists $G_C<\infty$ such that
%        \begin{equation*}
        $\left|u_{t+1}-u_t\right|=\left|J_C(\theta_{t+1})-J_C(\theta_t)\right|\le G_C\left\|\theta_{t+1}-\theta_t\right\|$.
%        \end{equation*}
        Hence, Cauchy-Schwarz and Jensen's inequalities give
        \begin{align*}
            \left|\sum_{t=1}^{T-2}\lambda_t(u_{t+1}-u_t)\right|&\le \Lambda G_C\sum_{t=1}^{T-2}\left\|\theta_{t+1}-\theta_t\right\|,\\
            &\le \sqrt{T}\Lambda G_C\left(\sum_{t=1}^{T-2}\|\theta_{t+1}-\theta_t\|^2\right)^{1/2}.
        \end{align*}
        Since $\left\|\theta_{t+1}-\theta_t\right\|\le \alpha\left(\|\nabla_\theta\mathcal{L}(\theta_t,\lambda_t)\|+\|\delta_t\|\right)$, it holds
%        \begin{equation*}
            \(\mathbb{E}\left[\|\theta_{t+1}-\theta_t\|^2\right]\le2\alpha^2\left(G_\text{max}^2+\sigma_{\theta}^2+a^2\right)\).
%        \end{equation*}
        Combining the above bounds yields
        \begin{equation*}
            E[|S_T|]\le 2\Lambda U_\text{max}+\alpha T\Lambda G_C\sqrt{2(G_\text{max}^2+\sigma_{\theta}^2+a^2)}.
        \end{equation*}
        %Putting together and using 
        Combining $\mathbb{E}[\mathcal{L}(\theta_T,\lambda_{T-1})-\mathcal{L}(\theta_0,\lambda_0)]\le \Delta_\mathcal{L}$, we obtain
        \begin{align*} 
        \frac{\alpha}{2}\sum_{t=0}^{T-1}&\mathbb{E}[\|\mathcal{G}(\theta_t,\lambda_t)\|^2]\le\Delta_{\mathcal{L}}+C_1\alpha T(\sigma_{\theta}^2+a^2)+C_2T\alpha^2\\
        &+2\Lambda U_\text{max}+\alpha T\Lambda G_C\sqrt{2(G_\text{max}^2+\sigma_{\theta}^2+a^2)}.
                \end{align*}
Therefore,
         \begin{align*}
            &\frac{1}{T}\sum_{t=0}^{T-1}\mathbb{E}[\|\mathcal{G}(\theta_t,\lambda_t)\|^2]
            \\\le& \frac{2(\Delta_{\mathcal{L}}+2\Lambda U_{\max})}{\alpha T}+2C_1(\sigma_{\theta}^2+a^2)+2C_2\alpha\\
            &+2\Lambda G_C\sqrt{2(G_\text{max}^2+\sigma_{\theta}^2+a^2)}.
        \end{align*}
        By Jensen's inequality and the elementary bound, we get the result.
        Since $\tilde{\lambda}_{t+1}=\Pi_{[0,\Lambda]}(\lambda_t+\beta u_t)$ and $\lambda_{t+1}=\Pi_{[0,\Lambda]}(\lambda_t+\beta \hat{u}_t)$, we have $\tilde{\lambda}_{t+1}-\lambda_t=\beta\mathcal{H}(\theta_t,\lambda_t)$. We let $\varepsilon_t:=\hat{u}_t-u_t$ be the dual estimation error.
    For any fixed $\theta_t$, we have
        \begin{align}\label{eq6}
            &\mathcal{L}(\theta_t,\lambda_{t+1})-\mathcal{L}(\theta_t,\lambda_{t})\\
            =&-(\tilde{\lambda}_{t+1}-\lambda_t)u_t\nonumber-(\lambda_{t+1}-\tilde{\lambda}_{t+1})u_t.
        \end{align}
    Since for all $\lambda\in [0,\Lambda]$, $\langle \tilde{\lambda}_{t+1}-(\lambda_t+\beta u_t),\lambda-\tilde{\lambda}_{t+1}\rangle\ge 0$, taking $\lambda=\lambda_t$ gives
    \begin{align}\label{eq7}
        (\tilde{\lambda}_{t+1}-\lambda_t)u_t&\ge \frac{1}{\beta}(\tilde{\lambda}_{t+1}-\lambda_t)^2,\nonumber\\
        &\ge \beta\|\mathcal{H}(\theta_t,\lambda_t)\|^2.
    \end{align}
    Since 
    \begin{align*}    
    &|\lambda_{t+1}-\tilde{\lambda}_{t+1}|\\
    =&|\Pi_{[0,\Lambda]}(\lambda_t+\beta(u_t+\varepsilon_t))-\Pi_{[0,\Lambda]}(\lambda_t+\beta u_t)|\le \beta|\varepsilon_t|,
        \end{align*}
    it yields 
    \begin{equation}\label{eq8}
      -(\lambda_{t+1}-\tilde{\lambda}_{t+1})u_t\le \beta|\varepsilon_t||u_t|.  
    \end{equation}
    Combining (\ref{eq6}), (\ref{eq7}) and (\ref{eq8}), we obtain
    \begin{align*}
        \mathcal{L}(\theta_t,\lambda_{t+1})-\mathcal{L}(\theta_t,\lambda_t)&\le -\beta\|\mathcal{H}(\theta_t,\lambda_t)\|^2+\beta|\varepsilon_t||u_t|,\\
        &\le -\beta\|\mathcal{H}(\theta_t,\lambda_t)\|^2+\frac{\beta}{2}\varepsilon_t^2+\frac{\beta}{2}u_t^2.
    \end{align*}
    
    Moreover, $|\lambda_{t+1}-\lambda_t|\le\beta(|u_t|+|\varepsilon_t|)$ and 
    \[-(\lambda_{t+1}-\lambda_t)u_{t+1}=-(\lambda_{t+1}-\lambda_t)u_t-(\lambda_{t+1}-\lambda_t)(u_{t+1}-u_t)\] for all $t$. Therefore, apply Young's inequality and it yields
    \begin{align*}
        &|(\lambda_{t+1}-\lambda_t)(u_{t+1}-u_t)|\le \beta(|u_t|+|\varepsilon_t|)G_C\|\theta_{t+1}-\theta_t\|,\\
        &\le\frac{\beta}{4}(u_t^2+\varepsilon_t^2)+2\beta G_C^2\|\theta_{t+1}-\theta_t\|^2.
    \end{align*}
    By Assumption \ref{assume2}, $\mathbb{E}[\varepsilon_t^2|\mathcal{F}_t]\le \sigma_{\lambda}^2+b^2$. Combining all terms together and using \[\mathcal{L}(\theta_{t+1},\lambda_{t+1})-\mathcal{L}(\theta_{t+1},\lambda_t)=-(\lambda_{t+1}-\lambda_t)u_{t+1},\] we obtain
    \begin{align*}
        &\mathbb{E}[\mathcal{L}(\theta_{t+1},\lambda_{t+1})-\mathcal{L}(\theta_{t+1},\lambda_t)]\\
        \le& -\beta\mathbb{E}[\|\mathcal{H}(\theta_t,\lambda_t)\|^2]+\beta C_3(U_{\max}^2+\sigma_\lambda^2+b^2)+\beta C_4\alpha^2,
    \end{align*}
    where $C_3$, $C_4$ are constants independent of $T$.
    Summing the decomposition over $t=0$ to $T-1$ yields
    \begin{align*}
        &\sum_{t=0}^{T-1}\mathbb{E}[\mathcal{L}(\theta_{t+1},\lambda_{t+1})-\mathcal{L}(\theta_t,\lambda_t)]\\&=\mathbb{E}[\mathcal{L}(\theta_{T},\lambda_{T})-\mathcal{L}(\theta_0,\lambda_0)]\ge -\Delta_\mathcal{L}.
    \end{align*}
    Since 
    \begin{align*}
        &(\mathcal{L}(\theta_{t+1},\lambda_{t+1})-\mathcal{L}(\theta_t,\lambda_t))\\
        =&\underbrace{(\mathcal{L}(\theta_{t+1},\lambda_{t})-\mathcal{L}(\theta_{t},\lambda_t))}_{:=A_t}+\underbrace{(\mathcal{L}(\theta_{t+1},\lambda_{t+1})-\mathcal{L}(\theta_{t+1},\lambda_t))}_{:=B_t},
        \end{align*}
we sum it over $t=0,\ldots,T-1$ gives
%    \begin{equation*}
        \[\sum_{t=0}^{T-1}\mathbb{E}[B_t]\ge -\Delta_\mathcal{L}-\sum_{t=0}^{T-1}\mathbb{E}[A_t].\]
%    \end{equation*}
    By $L_\mathcal{L}$-smoothness, we have
    \begin{equation*}
        A_t\le G_{\max}\|\theta_{t+1}-\theta_t\|+\frac{L_{\mathcal{L}}}{2}\|\theta_{t+1}-\theta_t\|^2.
    \end{equation*}
    Taking expectation and applying Young's inequality yields
%    \begin{equation*}
        \(\mathbb{E}[A_t]\le \frac{G_{\max}^2}{2}+\frac{1+L_{\mathcal{L}}}{2}\mathbb{E}\|\theta_{t+1}-\theta_t\|^2\).
%    \end{equation*}
    Thus, 
    \[\sum_{t=0}^{T-1}\mathbb{E}[A_t]\le T\frac{G_{\max}^2}{2}+\frac{1+L_{\mathcal{L}}}{2}\sum_{t=0}^{T-1}\mathbb{E}\|\theta_{t+1}-\theta_t\|^2.
    \]
    Using $\mathbb{E}\left[\|\theta_{t+1}-\theta_t\|^2\right]\le2\alpha^2\left(G_\text{max}^2+\sigma_{\theta}^2+a^2\right)$, we have
    \begin{equation*}
        \sum_{t=0}^{T-1}\mathbb{E}[A_t]\le \frac{G_{\max}^2}{2}T+(1+L_{\mathcal{L}})T\alpha^2\left(G_\text{max}^2+\sigma_{\theta}^2+a^2\right).
    \end{equation*}
    Collecting terms dividing by $\beta T$, it yields
    \begin{align*}
        &\frac{1}{T}\sum_{t=0}^{T-1}\mathbb{E}[\|\mathcal{H}(\theta_t,\lambda_t)\|^2]\\\le& \frac{\Delta_\mathcal{L}}{\beta T }+C_3{(U_{\max}^2+\sigma_{\lambda}^2+b^2)}+C_4\alpha^2+\frac{G_{\max}^2}{2\beta}\\
        +&\frac{\alpha^2}{\beta}\left((1+L_{\mathcal{L}})(G_{\max}^2+\sigma_{\theta}^2+a^2)\right).
    \end{align*}
    Finally, applying Jensen’s inequality and the elementary bound completes the proof.\hfill$\square$